\documentclass[10pt,a4paper]{article}

\usepackage[T1]{fontenc}
\usepackage[utf8]{inputenc}
\usepackage{authblk} 
\usepackage[margin=1in]{geometry}
\usepackage{graphicx}
\usepackage{booktabs} 
\usepackage{amsmath,amssymb}
\usepackage{url}
\usepackage{natbib}
\usepackage{xcolor}

\usepackage{microtype}
\usepackage{placeins} 
\usepackage{titlesec} 
\usepackage{multirow}
\usepackage{float}

\usepackage{hyperref}       

\usepackage{amsfonts}       
\usepackage{nicefrac}       
\usepackage{microtype}      
\usepackage{xcolor}         
\usepackage{amsthm} 
\usepackage{cleveref}
\usepackage{placeins}
\newtheorem{proposition}{Proposition}
\newtheorem{corollary}{Corollary}

\titleformat{\section}
  {\normalfont\Large\bfseries\raggedright}{\thesection.}{1em}{}
\titleformat{\subsection}
  {\normalfont\large\bfseries\raggedright}{\thesubsection.}{1em}{} 
\titleformat{\subsubsection}
  {\normalfont\normalsize\bfseries\raggedright}{\thesubsubsection.}{1em}{} 
\usepackage{hyperref}
\hypersetup{
    colorlinks=true,
    linkcolor=blue,
    filecolor=magenta,      
    urlcolor=blue,
    citecolor=blue,
    pdftitle={Language Models Without a Trainable Input Embedding Table: Learning from Fixed Minimal Binary Token Codes},
    pdfauthor={Andrey Bochkov},
}

\title{Language Models Without a Trainable Input Embedding Table: Learning from Fixed Minimal Binary Token Codes}

\author[1]{Andrey Bochkov\thanks{Corresponding author: andrey.bochkov@gmail.com}}

\date{} 

\begin{document}

\maketitle

\begin{abstract}
Trainable input embedding tables are a standard component of modern language models. We ask whether they are actually necessary at the input interface. For a vocabulary of size $V$, exact token identity requires only $K=\lceil \log_2 V\rceil$ bits. We replace the usual trainable $V\times d_{\text{model}}$ input embedding matrix with fixed minimal binary token codes and a zero-parameter lift to model width. In our main setting, $V=65{,}536$, so $K=16$, and tokens are represented by fixed 16-dimensional binary codes tiled to $d_{\text{model}}=1024$. We also evaluate a fully table-free variant in which codes are generated from token IDs on the fly and randomly recoded by an invertible affine transform over $\mathbb{F}_2^K$. Across matched 32-layer decoder-only models trained on approximately 17B tokens and evaluated over three independent training seeds, fixed minimal codes achieve comparable held-out validation perplexity to a standard learned-input baseline while removing 67.1M trainable input parameters. The fixed-code runs have a lower mean validation perplexity in our experiments, 2.36 versus 2.44, but the observed gap is within the measured seed-to-seed variation of 4.8\%; we therefore interpret the result as evidence that the trainable input table is not necessary, rather than as a statistically resolved superiority claim. The table-free affine-recoded variant remains close at 2.39 despite a slightly shorter training run. These results show that, in this regime, a trainable input embedding table is not necessary for useful language modeling. The output projection remains standard and trainable. Modern language models often handle inputs whose tokenization provides weak or fragmented lexical cues: unusual casing, spelling variation, rare strings, code fragments, or character-like decompositions. This suggests that linguistic interpretation cannot be understood solely as retrieval of a semantic vector for each token. Rather, meaning is constructed by the contextual computation performed by the model. Our experiment isolates this idea in an extreme form by removing the learned input lookup entirely while preserving exact token identity.

\end{abstract}

\section{Introduction}

Modern language models almost universally begin with a trainable input embedding table. A token ID $t\in\{0,\dots,V-1\}$ is mapped to a learned vector from
\[
E_{\mathrm{in}}\in\mathbb{R}^{V\times d_{\text{model}}},
\]
and this vector becomes the starting point for all subsequent computation. This design is so standard that the input embedding table is often treated as an indispensable component of the architecture.

From the standpoint of token identity, however, this interface is highly overparameterized. A vocabulary of size $V$ contains $V$ discrete symbols, and uniquely specifying one of them requires only
\[
K=\lceil \log_2 V\rceil
\]
bits. For $V=65{,}536$, the exact minimum is $K=16$. This raises a simple question: \emph{does a Transformer language model need a trainable input embedding table, or is exact token identity enough?}

We study this question directly. We replace the learned input lookup with a fixed minimal binary code
\[
c(t)\in\{0,1\}^{K},\qquad K=\lceil\log_2 V\rceil,
\]
followed by a fixed zero-parameter lift
\[
x(t)=R\,c(t)\in\mathbb{R}^{d_{\text{model}}}.
\]
In our implementation, $R$ is a deterministic tiled lift: the $K$-bit code is repeated until the model width is reached. Thus the model begins not from a learned continuous lexical vector, but from an injective symbolic code containing only exact token identity. Here ``binary'' refers to the code alphabet $\{0,1\}$, not to low-precision quantization.

We also test a fully table-free variant. Instead of storing any input table, the model computes bits directly from token IDs and optionally applies an invertible affine recoding over $\mathbb{F}_2^K$:
\[
\tilde c(t)=A\,c(t)\oplus b,\qquad A\in GL(K,2),\quad b\in\{0,1\}^{K}.
\]
For $V=2^K$, this recoding remains a bijection over the full binary hypercube. It therefore preserves exact token identity while removing any special role of the canonical binary ordering of token IDs.

This construction is intentionally severe. Before the first trainable layer, all token inputs lie in a subspace of dimension at most $K$, even though the model width is much larger. In our main experiments, a 32-layer decoder-only Transformer with $d_{\text{model}}=1024$ receives token inputs of effective rank only 16. If such a model performs comparably to a standard learned-input baseline, then the trainable input embedding table is not an architectural necessity in this regime.

We evaluate matched 32-layer decoder-only models trained from scratch on approximately 17B tokens. All runs use the same tokenizer, vocabulary size, architecture, output head, data mixture, and training recipe; only the input parameterization changes. The learned-input baseline uses a trainable $65{,}536\times1024$ input table. Our fixed-code model removes this 67.1M-parameter trainable input block entirely.

Empirically, fixed minimal binary codes match the learned-input baseline while removing the input table. Across three independent seeds, the fixed-code model has a lower mean validation perplexity in our runs, 2.36 versus 2.44 for the learned-input baseline, but the gap is within the measured seed-to-seed variation of 4.8\%. We therefore use these results to support a non-necessity claim rather than a statistically significant improvement claim. A fully table-free affine-recoded variant remains close at 2.39. These results indicate that the standard trainable input table can be removed without degrading held-out next-token modeling quality in this setting.

Our claim is intentionally narrow. We do not argue that learned input embeddings are never useful, that binary codes are universally optimal, or that output vocabulary projections are unnecessary. The output head remains standard and trainable throughout. We show instead that, under matched conditions in this regime, a large trainable \emph{input} embedding table is not required for useful language modeling.

The surprising aspect of this result is not that the code is compact. It is that the model never learns an input lexical lookup at all. In the standard architecture, the input embedding table is the only trainable mechanism that converts token identity into the continuous computational space of the Transformer. Our fixed-code models remove this mechanism entirely: token identity enters only as an injective minimal binary code, and all continuous representation learning must occur inside the Transformer stack. Thus, non-degradation relative to a learned-input baseline is itself the central result.

Our goal is not to show that fixed binary codes are the best possible input
parameterization. The experiment is instead a constructive counterexample to a
stronger implicit assumption: that a trainable semantic input embedding table is necessary for language modeling. For such a necessity claim, the relevant
evidence is not a benchmark leaderboard improvement, but the existence of a
nontrivial language model that trains, converges, and models held-out text
without any trainable input lookup. The matched learned-input baseline is used
to show that this counterexample is not merely degenerate: removing the input
table preserves comparable held-out likelihood under the same tokenizer,
architecture, data, and training recipe. For reproducibility, code, configuration files, and supplementary artifacts are
available at \url{https://huggingface.co/E6E831728}.

\paragraph{Contributions.}
\begin{itemize}
    \item We introduce a minimal-code input parameterization that replaces the trainable input embedding table with fixed binary token codes of width $K=\lceil\log_2 V\rceil$ and a zero-parameter lift to model width.
    \item We study a fully table-free affine-recoded variant that generates token codes directly from token IDs and applies an invertible transform over $\mathbb{F}_2^K$.
    \item In matched 32-layer experiments over three independent seeds, fixed minimal codes remove 67.1M trainable input parameters while matching the learned-input baseline. The fixed-code runs have lower mean validation perplexity in our experiments, 2.36 versus 2.44, but this gap lies within the measured 4.8\% seed-to-seed variation.
    \item We isolate the input-interface question from output-layer changes: the output projection remains standard and trainable, while the trainable input parameter count drops from 67.1M to zero.
\end{itemize}

\section{Related Work}

\paragraph{Learned embeddings.}
Dense learned word vectors such as word2vec and GloVe established the view that lexical items can be represented by learned continuous vectors \citep{mikolov2013efficient, pennington2014glove}. Transformer language models inherited this convention: token IDs are mapped through a learned input table before contextual processing \citep{vaswani2017attention, radford2019language, brown2020language}. Our work revisits this default design and asks whether the trainable \emph{input} table itself is necessary when exact token identity is already available.

\paragraph{Embedding compression and sharing.}
Many methods reduce the cost of vocabulary-dependent parameters while retaining a learned lexical interface. ALBERT factorizes embeddings through a smaller learned table and projection \citep{lan2020albert}; adaptive inputs allocate different capacities across vocabulary bands \citep{baevski2019adaptive}; DeFINE uses deep factorized input embeddings \citep{mao2019define}. Other approaches share parameters across tokens, including LightRNN \citep{li2016lightrnn}, hash embeddings \citep{svenstrup2017hash}, and compositional code learning \citep{shu2018compressing}. These methods compress or share learned input representations. In contrast, we remove the trainable input embedding table entirely and replace it with a fixed minimal injective code. Hash embeddings still learn component vectors and combine them through a trainable parameterization; our table-free variant does not learn any input-side token vectors or projections.

\paragraph{Output-layer efficiency.}
Hierarchical and adaptive softmax methods reduce the cost of predicting over large vocabularies \citep{morin2005hierarchical, mnih2009scalable, grave2017efficient}. Our question is orthogonal: we leave the output projection standard and trainable, and study only whether the \emph{input} embedding table is necessary.

\paragraph{Alternative token granularities.}
Character-, byte-, and tokenization-free models change the units presented to the model \citep{kim2016character, xue2022byt5, clark2022canine, tay2022charformer, yu2023megabyte}. These approaches improve robustness or open-vocabulary behavior by changing the tokenization interface, but they typically still use learned input embeddings or learned front ends. We keep the tokenizer fixed and change only how token IDs are represented at the input.

\paragraph{Fixed input representations.}
Frozen or externally constructed token representations have been used in several NLP settings, but such methods usually still provide a vocabulary-sized table of token-specific vectors. Our strongest variant is table-free: token codes are generated algorithmically from token IDs using $K=\lceil\log_2 V\rceil$ bits, optionally affinely recoded over $\mathbb{F}_2^K$, and lifted to model width with no trainable input parameters. This distinguishes our work from both embedding compression and frozen embedding lookup methods.

Our work is also related in spirit to the broader use of random or fixed feature maps, where a non-trainable input transformation is followed by learned
downstream layers. The distinction is that our code is not a random projection
of continuous inputs: it is an exact minimal injective code for discrete token
identity, and the main comparison is to a standard trainable input table in a
language model.

\section{Method}
\label{sec:method}

We replace the standard trainable input lookup $t\mapsto E_{\mathrm{in}}[t]$ with a deterministic binary coding interface. For a vocabulary of size $V$, exact injective token identity requires at least
\[
K=\lceil \log_2 V\rceil
\]
bits. We therefore assign token ID $t\in\{0,\dots,V-1\}$ the canonical minimal code
\[
c(t)=\mathrm{bin}_K(t)\in\{0,1\}^{K},\qquad
c(t)_j=\left\lfloor \frac{t}{2^j}\right\rfloor\bmod 2 .
\]
When $V=2^K$, this is a bijection between token IDs and the full hypercube $\{0,1\}^K$.

To test whether performance depends on canonical token-ID ordering, we also use invertible affine recodings over $\mathbb{F}_2^K$:
\[
\tilde c(t)=A\,c(t)\oplus b,\qquad A\in GL(K,2),\quad b\in\{0,1\}^{K}.
\]
This preserves injectivity while changing the token-to-code assignment.

The binary code is mapped to model width by a fixed zero-parameter lift
\[
x(t)=R\,\tilde c(t),\qquad R\in\mathbb{R}^{d_{\text{model}}\times K}.
\]
In our experiments, $d_{\text{model}}$ is divisible by $K$ and $R$ is a tiled identity:
\[
R=
\begin{bmatrix}
I_K\\
I_K\\
\vdots\\
I_K
\end{bmatrix}.
\]
Thus the $K$-bit code is repeated until the model width is reached. For $V=65{,}536$, $K=16$ and $d_{\text{model}}=1024$, so each token is represented by a 16-dimensional binary code tiled 64 times.

The map $t\mapsto x(t)$ is collision-free whenever $A$ is invertible over $\mathbb{F}_2$ and $R$ has full column rank. Moreover, all input vectors lie in a subspace of dimension at most $K$, despite the model width being $d_{\text{model}}\gg K$. Proofs of minimality, injectivity, full-hypercube coverage, and effective input rank are given in Appendix~\ref{app:proofs}.

The canonical fixed-code variant can be implemented either as a frozen deterministic lookup or by computing bits on the fly. When the frozen lookup stores exactly the vectors generated by the deterministic rule, it is functionally identical to the table-free implementation: replacing the lookup by on-the-fly bit extraction produces the same input vector for every token. Thus the canonical fixed-code experiment should be interpreted as an input-parameterization experiment, not as evidence for the necessity of storing a table. The affine-recoded run uses the table-free implementation directly. The affine-recoded variant is fully table-free: it extracts bits from token IDs, applies the fixed affine transform, casts the result to floating point, and applies the tiled lift. In all variants, the trainable input-embedding parameter count is zero.

\section{Experimental Setup}
\label{sec:experimental_setup}

\paragraph{Study design.}
Each full-scale model variant is trained with three independent random seeds. Seeds affect model initialization and data-order randomness; the tokenizer, architecture, data mixture, validation split, optimizer hyperparameters, and input parameterization are fixed within each variant. We report mean validation metrics across seeds and the observed seed-to-seed variation.

\paragraph{Architecture.}
All models are 32-layer decoder-only Transformers with $d_{\text{model}}=1024$, 32 attention heads, rotary positional encodings, GELU activations, context length 1024, and an untied trainable output projection. The vocabulary size is $V=65{,}536$, so the minimal code width is $K=16$. A standard learned-input baseline therefore has $65{,}536\times1024=67.1$M trainable input parameters; the fixed-code variants have zero trainable input parameters. The output projection remains standard and trainable in all models.

\paragraph{Input parameterizations.}
We compare three variants:
\begin{enumerate}
    \item \textbf{Learned input table:} the standard trainable lookup $E_{\mathrm{in}}\in\mathbb{R}^{V\times d_{\text{model}}}$.
    \item \textbf{Fixed minimal binary code:} $x(t)=R\,\mathrm{bin}_{16}(t)$ with a fixed tiled lift to width 1024.
    \item \textbf{Affine-recoded table-free code:} $x(t)=R\,(A\,\mathrm{bin}_{16}(t)\oplus b)$, where $A\in GL(16,2)$ and $b\in\{0,1\}^{16}$. This variant contains no input table and computes codes from token IDs at runtime.
\end{enumerate}
Since $d_{\text{model}}/n_{\text{head}}=32=2K$, each attention head receives two complete copies of the 16-bit code under the tiled lift.

\paragraph{Data and training.}
All models are trained from scratch on the same FineWeb-Edu~\citep{penedo2024fineweb} and Cosmopedia~\citep{benallal2024cosmopedia} mixture for approximately 17B tokens (we sample FineWeb-Edu with probability 0.8 and Cosmopedia with probability 0.2), with a held-out validation split from the same preprocessing pipeline. The validation split is constructed at the document level before tokenization and is shared across all model variants. The learned-input baseline and canonical fixed-code models are trained with three independent seeds to 17.093B tokens per seed. Full optimizer, batch-size, learning-rate, hardware, and checkpointing details are reported in Appendix~\ref{app:training_details}. The tokenizer is a deterministic 65,536-token Unicode-compatible tokenizer constructed before training and kept fixed for all runs. Optional SFT continuations are run from the corresponding base checkpoints using the same input parameterization and architecture. All SFT continuations use the same supervised mixture, learning-rate schedule, and evaluation protocol. Because all models use the same tokenizer, validation perplexity is used only for within-tokenizer comparisons between input parameterizations. We do not interpret these perplexities as directly comparable to models trained with different tokenizers.

\paragraph{Evaluation.}
We focus the main paper on held-out language-modeling quality under matched training conditions. For each three-seed comparison, we report the arithmetic mean across seeds. We also report the relative seed range, defined as $(\max_s m_s-\min_s m_s)/\mathrm{mean}_s(m_s)$. We do not assume normality and do not use these ranges as confidence intervals.

\FloatBarrier

\section{Results}
\label{sec:results}

\subsection{Main result: held-out language modeling does not require a trainable input table}
\label{subsec:main_lm_result}

\begin{table}[t]
\centering
\caption{Main held-out language-modeling comparison at full 32-layer scale. All models use the same tokenizer, architecture, output head, and training pipeline; only the input parameterization differs. Lower is better for validation loss and perplexity. Relative seed variation is computed as $(\max-\min)/\mathrm{mean}$ across three seeds.
}
\label{tab:main_lm_results}
\small
\begin{tabular}{lcccc}
\toprule
Model variant & Tokens/seed & Val loss & Val PPL, mean & Rel. seed range\\
\midrule
Learned input table & 17.1 (B) & 0.893 & 2.44 & 4.8\% \\
Fixed minimal binary code & 17.1 (B) & 0.859 & 2.36 & 4.6\% \\
Affine-recoded minimal code (table-free) & 16.3 (B) & 0.871 & 2.39 & 4.5\% \\
\bottomrule
\end{tabular}
\end{table}

\Cref{tab:main_lm_results} reports the primary comparison of the paper. Under matched architecture, tokenizer, and training recipe, replacing the standard learned input embedding table with fixed minimal binary token codes does not harm held-out language modeling. Across three independent seeds, the canonical fixed-code model has lower mean validation perplexity than the learned-input baseline in our runs, 2.36 versus 2.44 at the same final token count of 17.093B per seed. However, the observed difference is within the measured seed-to-seed variation of 4.8\%, so we do not claim a statistically significant improvement.

This result is the central empirical finding of the paper: the learned input table can be removed without degrading held-out language-modeling quality in this regime. The learned baseline uses a trainable input embedding matrix of size
\[
65{,}536 \times 1024 = 67{,}108{,}864
\]
parameters, whereas the fixed-code model uses zero trainable input parameters. Thus, in this regime, removing the trainable input embedding table entirely from the optimization problem does not degrade language-model quality in our experiments; the fixed-code runs have a slightly lower mean perplexity, but this difference is not statistically resolved.

The table-free affine-recoded model also remains competitive, achieving validation perplexity 2.39 with zero trainable input parameters. This is important because it removes any special role of the canonical binary ordering of token IDs: the model still trains successfully when the minimal binary codebook is randomly recoded by an invertible affine transform over $\mathbb{F}_2^{16}$ before entering the Transformer. We therefore do not attribute the result to a privileged token-ID ordering or to a hand-designed semantic geometry at the input.

\paragraph{Interpretation.}
The main takeaway from \Cref{tab:main_lm_results} is not that fixed minimal binary codes are universally optimal, but that a large learned continuous input table is \emph{not necessary} for useful language modeling in this setting. Exact token identity, supplied through a fixed minimal binary interface, is sufficient for the downstream Transformer stack to learn effective internal representations.

\subsection{Robustness to code assignment}
\label{subsec:code_assignment_robustness}

The affine-recoded experiment provides a direct robustness check on the code assignment itself. If the canonical fixed-code model were competitive only because the raw binary expansion of token IDs introduced a special or exploitable structure, then a random invertible affine recoding of the full codebook would be expected to degrade performance sharply. We do not observe such a collapse.

The affine-recoded table-free model remains close to the canonical fixed-code model and is also competitive with the learned-input baseline in this run (2.39 versus 2.44), despite seeing slightly fewer training tokens. Because this run terminates at a different token count, we interpret it primarily as a robustness check rather than as a strict superiority claim. We interpret this as evidence that the core requirement is a stable injective token-identity interface, not a particular continuous geometry or a particular ordering of token IDs. At the same time, the canonical and affine-recoded variants are not identical, and we do not claim full invariance to code assignment. Rather, the result shows that performance remains viable and competitive under substantial codebook recoding.

\paragraph{Why the benchmark evidence is secondary.}
For completeness, Appendix~\ref{app:secondary_evals} reports optional SFT evaluations. We treat those results as secondary because the central claim of the paper is established by the matched held-out language-modeling comparison in \Cref{tab:main_lm_results}.

\subsection{Summary of findings}
\label{subsec:results_summary}

Across all full-scale comparisons, the most stable result is that fixed minimal binary input codes remain comparable to the standard learned-input baseline while eliminating the trainable input table. The canonical fixed-code model achieves the lowest mean validation loss and perplexity in our runs while eliminating 67.1M trainable input-embedding parameters, but the margin is within measured seed-to-seed variation. The fully table-free affine-recoded model remains close. Optional SFT continuation evaluations in the appendix show that the fixed-code interface remains usable after a second training stage, but they are not used to establish the main claim.

Taken together, these experiments support the paper's central claim: in this regime, a trainable input embedding table is not necessary for useful language modeling. Exact token identity delivered through fixed minimal binary codes is sufficient for a Transformer to learn competitive internal representations.

Supplementary matched-depth ablations at 9 and 16 layers in Appendix~\ref{app:depth_ablations} show the same non-degradation pattern, but are not used as the primary evidence for the paper.

\section{Discussion}
\label{sec:discussion}

The main result is a sufficiency statement: under the matched conditions studied here, a trainable \emph{input} embedding table is not necessary for useful language modeling. A fixed minimal binary token code provides exact token identity, and the Transformer stack can build effective internal representations from that stable symbolic interface. This does not imply that learned input embeddings are never useful, nor that minimal binary codes are universally optimal. It shows that the standard learned input lookup is not an architectural requirement in this regime.

The affine-recoded experiment helps clarify the role of code assignment. The canonical binary code could in principle benefit from accidental token-ID ordering. By applying an invertible affine transform over $\mathbb{F}_2^{16}$, we preserve injectivity while changing the coordinate interpretation of the codebook. The resulting table-free model remains competitive, suggesting that the key ingredient is a stable collision-free token-identity interface rather than a privileged learned or hand-designed input geometry. However, we test only one affine recoding at full scale and do not claim invariance to all possible code assignments.

Finally, the mechanism behind the fixed-code model's slight perplexity advantage remains open. Possible explanations include regularization from removing a large trainable input block, reduced parameter drift at the input interface, or easier separation between token identity and contextual representation learning. Identifying the causal mechanism is an important direction for future work. 

A non-degradation result is meaningful here because the experiment tests a necessity claim. If the trainable input table were essential, removing it and restricting the input to a 16-dimensional fixed code should have produced a clear degradation. It did not.

The result should be interpreted as a constructive counterexample to necessity, not as a benchmark-optimization claim. A single well-controlled model that learns useful language modeling without a trainable input embedding table is already informative: it shows that such a table is not required as the place where lexical or semantic structure must first be learned. Additional seeds and matched baselines are useful for ruling out implementation artifacts and estimating variability, but the conceptual point is the existence of a working language model whose input interface contains only a fixed injective token-identity code.

The result also clarifies a distinction that is often blurred in discussions of
embeddings. A token representation at the input need not be a semantic vector;
it may simply be a stable address. Natural language models routinely process
inputs whose tokenization provides weak lexical cues, such as unusual casing,
rare strings, code fragments, or character-like decompositions. In such cases,
meaning cannot be explained as direct retrieval of a single pretrained semantic
vector. It must be constructed by contextual computation. Our fixed-code
experiment isolates this principle in an extreme form: the input code contains
only token identity, yet the model still learns useful language modeling.

\section{Limitations}
\label{sec:limitations}

Several limitations remain. First, the full-scale comparison uses one tokenizer family, and one main vocabulary size. Although the method extends directly to arbitrary $V$ via $K=\lceil\log_2 V\rceil$, broader experiments across tokenizers, vocabulary sizes, byte-level settings, and multilingual corpora are needed. Second, although we train each main full-scale variant with three independent seeds, the observed seed-to-seed variation is still large enough that we do not claim statistically significant superiority of fixed codes over learned input embeddings. We therefore emphasize the non-degradation and parameter-removal result rather than a strict performance improvement claim. Third, we use a shared training recipe rather than extensive hyperparameter retuning for each input parameterization. A larger sweep, especially over learning rate and warmup for the learned-input baseline, could change the small observed perplexity margins. Our conclusion therefore relies on non-degradation under a shared recipe rather than on optimality of the fixed-code parameterization.

\section{Conclusion}
\label{sec:conclusion}

We revisited the necessity of trainable input embedding tables in Transformer language models. Replacing the standard learned $V\times d_{\text{model}}$ input lookup with fixed minimal binary token codes of width $K=\lceil\log_2 V\rceil$ removes all trainable input-embedding parameters while preserving exact token identity.

In matched 32-layer experiments with $V=65{,}536$, fixed 16-dimensional binary codes tiled to $d_{\text{model}}=1024$ match the learned-input baseline while eliminating 67.1M trainable input parameters. The fixed-code runs have lower mean held-out validation perplexity in our experiments, 2.36 versus 2.44, but this gap is within the measured 4.8\% seed-to-seed variation. A fully table-free affine-recoded variant remains close at 2.39.

These results support a narrow but important conclusion: in the regime studied here, useful language modeling does not require a trainable input embedding table. A stable, injective, minimal token-identity code is sufficient for the Transformer stack to learn effective internal representations.

\section{Broader Impact}
\label{sec:broader_impact}

This work studies the input interface of decoder-only Transformer language models. The main potential benefit is scientific: fixed minimal binary token codes provide a simpler and more controlled way to study what information must be present at the input of a language model. The method may also modestly reduce optimizer-state requirements associated with the input layer by removing trainable input-embedding parameters.

The risks are similar to those of language-model training more generally. Any method that simplifies model construction could incrementally lower the barrier to building systems that may be misused for spam, misinformation, or other harmful text generation. The method does not introduce a new class of capabilities, and the models studied here are small research models, but standard documentation and responsible release practices remain appropriate for any released code or checkpoints.

The work does not introduce new personally identifying datasets or new data collection procedures. Replacing the input interface does not address standard language-model issues such as bias, hallucination, unsafe generations, or training-data artifacts.

\bibliographystyle{plainnat}
\bibliography{main}

\FloatBarrier


\appendix

\section{Technical appendices and supplementary material}

\subsection{Training details}
\label{app:training_details}

All full-scale base runs use the same 32-layer decoder-only Transformer architecture with $d_{\text{model}}=1024$, $n_{\text{head}}=32$, context length 1024, rotary positional embeddings, GELU activations and an untied output projection head. The vocabulary size is 65,536.

Each main variant is trained with three independent random seeds. The random seed controls parameter initialization and data-order randomness. The tokenizer, validation split, architecture, optimizer hyperparameters, context length, and input parameterization are fixed across seeds. Unless otherwise stated, reported validation metrics are means across the three seeds. The observed relative seed-to-seed deviation in validation perplexity is 4.8\%.

Training uses AdamW with weight decay 0.1, gradient clipping 1.0, micro-batch size 16 for base pretraining, gradient accumulation 200, and FP32 precision. The learning-rate schedule is cosine decay with warmup; The learned-input baseline and canonical fixed-code model are trained to 17.093B tokens. The affine-recoded table-free run is trained to 16.276B tokens due to scheduling constraints; we report exact token counts in the main results table.

Base pretraining was performed on 2$\times$H100 80GB accelerators per run, with observed throughput of approximately 19.8k tokens/s. SFT continuations used the same architecture and corresponding input parameterization; observed SFT throughput was approximately 5.5k tokens/s.

Validation loss is computed on a held-out split drawn from the same preprocessing pipeline as the pretraining corpus. Downstream logit-based evaluations score each answer option by conditional log-likelihood and select the highest-scoring option.

The validation split is constructed before tokenization at the document level: documents assigned to validation are excluded from the training stream. We do not create validation examples by slicing contiguous chunks from documents that also appear in training. Validation loss is computed with the same tokenizer and context length as training.

\begin{table}[h]
\centering
\caption{Training hyperparameters for the main runs.}
\label{tab:training_hparams}
\small
\begin{tabular}{lcccccc}
\toprule
Run & Precision & Optimizer & LR & Warmup steps & Weight decay & Grad clip \\
\midrule
Base pretraining & FP32 & AdamW & $4.0\times 10^{-4}$ & 150 & 0.1 & 1.0 \\
SFT continuation & FP32 & AdamW & $2.0\times 10^{-5}$ & 100 & 0.01 & 1.0 \\
\bottomrule
\end{tabular}
\end{table}

\begin{table}[h]
\centering
\caption{Batching and compute configuration.}
\label{tab:compute_config}
\small
\begin{tabular}{lcccc}
\toprule
Run & GPUs & Global micro-batch & Grad accum & Effective tokens/update \\
\midrule
Base pretraining & 2$\times$ H100 80GB & 16 & 200 & 3.28M \\
SFT continuation & 1$\times$ H100 80GB &  6 & 100 & 0.61M \\
\bottomrule
\end{tabular}
\end{table}

The micro-batch size is reported globally across GPUs; the per-device
micro-batch is 8 for the 2-GPU base runs.

\subsection{Tokenizer details}
\label{app:tokenizer_details}

All experiments use the same deterministic tokenizer with vocabulary size 65,536. The tokenizer is constructed before training and then kept fixed for all model variants. The vocabulary contains reserved special tokens, direct Unicode-compatible token assignments, and a fixed set of frequent multi-character tokens extracted before model training. The exact tokenizer construction script and vocabulary file are included in the supplementary material. Because all comparisons in the paper use the same tokenizer, validation perplexities should be interpreted as within-tokenizer comparisons rather than as cross-tokenizer language-modeling scores.

\FloatBarrier

\section{Additional matched-depth ablations}
\label{app:depth_ablations}

We additionally ran smaller matched-depth comparisons at 9 and 16 Transformer layers. These experiments use the same tokenizer and compare the same two input interfaces: a standard learned input table and fixed minimal binary token codes. They are included only as supplementary sanity checks; the main claim of the paper is established by the 32-layer comparison in \Cref{tab:main_lm_results}.

\begin{table}[h]
\centering
\caption{Supplementary matched-depth ablations. Lower validation perplexity is better. These runs are not intended as a depth-scaling study; they only check whether the fixed-code interface remains competitive with a learned-input baseline at matched depth.}
\label{tab:depth_ablations}
\small
\begin{tabular}{lccc}
\toprule
Depth & Learned input table Val PPL & Fixed minimal code Val PPL & Relative change \\
\midrule
9 layers  & 3.77 & 3.76 & $-0.3\%$ \\
16 layers & 2.42 & 2.38 & $-1.7\%$ \\
32 layers & 2.44 & 2.36 & $-3.3\%$ \\
\bottomrule
\end{tabular}
\end{table}

Across these additional matched-depth checks, replacing the learned input table with fixed minimal binary codes does not degrade validation perplexity. We do not interpret the table as evidence for a depth scaling law, since hyperparameters were not independently optimized for each depth.

\section{SFT evaluations}
\label{app:secondary_evals}

To test whether the fixed-code interface remains usable under a second-stage adaptation regime, we also ran optional supervised fine-tuning continuations of all three base pretrained models. Because these runs change the objective and data mixture relative to the main pretraining comparison, we treat them as secondary.

\begin{table}[t]
\centering
\caption{Optional supervised fine-tuning (SFT) continuation results. These runs are secondary because they alter the data distribution and objective relative to the base pretraining comparison. Higher is better except for validation perplexity and SQuAD gold-answer perplexity, where lower is better.}
\label{tab:sft_results}
\small
\begin{tabular}{lccccc}
\toprule
Model variant & Val PPL  & ARC-C & HellaSwag & BoolQ & SQuAD gold PPL \\
\midrule
Learned input table + SFT & 2.42 & 29.77 & \textbf{29.40} & 64.06 & 3.48 \\
Fixed minimal binary code + SFT & \textbf{2.39} & 28.09 & 26.70 & 61.33 & \textbf{3.33} \\
Affine-recoded minimal code + SFT & 2.44 & \textbf{31.44}  & 26.80 & 64.06 & 3.65 \\
\bottomrule
\end{tabular}
\end{table}

The main conclusion from \Cref{tab:sft_results} is compatibility rather than dominance: all three models can be further adapted successfully, and the fixed-code variants remain competitive after SFT. The canonical fixed-code model again achieves the best validation perplexity among the SFT continuations and the best SQuAD gold-answer perplexity, while the affine-recoded variant achieves the strongest ARC-C score by a narrow margin. The learned-input baseline remains strongest on HellaSwag and ties for the strongest BoolQ score. As in the base setting, the picture is task-dependent rather than uniformly one-sided.

These SFT results reinforce the main message of the paper. Fixed minimal binary token codes are not merely sufficient for pretraining convergence; they also provide a usable interface for continued downstream adaptation. However, because SFT introduces additional confounds, we do not treat these continuation results as the main evidence for the paper's core claim.

\section{Proofs and additional method details}
\label{app:proofs}
We study decoder-only language models in which the standard trainable input embedding table is replaced by a deterministic binary coding interface. Let the vocabulary be
\[
\mathcal{V} = \{0,1,\dots,V-1\},
\]
and let the model width be $d=d_{\text{model}}$. A conventional language model learns an input embedding matrix
\[
E_{\mathrm{in}} \in \mathbb{R}^{V \times d},
\]
and initializes token $t \in \mathcal{V}$ with the learned vector $E_{\mathrm{in}}[t]$. We instead compute the input representation as
\[
x(t) = R\,\tilde c(t) \in \mathbb{R}^{d},
\]
where:
(i) $\tilde c(t)\in\{0,1\}^{K}$ is a fixed binary code assigned to token $t$,
(ii) $R\in\mathbb{R}^{d\times K}$ is a fixed zero-parameter lift from code space to model width, and
(iii) no trainable input embedding parameters are used.

The rest of the Transformer architecture is unchanged. In particular, positional encoding, self-attention, feed-forward layers, normalization, and the output projection remain standard. Our claim in this paper concerns the \emph{input} interface only; the output projection is still trainable.

\subsection{Minimal binary token codes}
\label{subsec:minimal_codes}

The central observation is that exact token identity does not require a $d$-dimensional learned vector. For a vocabulary of size $V$, the smallest binary code length that can uniquely identify all tokens is
\[
K = \left\lceil \log_2 V \right\rceil.
\]

\begin{proposition}[Minimal injective binary code length]
\label{prop:minimal_code_length}
Any injective map from a vocabulary of size $V$ into $\{0,1\}^{K}$ requires
\[
K \ge \left\lceil \log_2 V \right\rceil.
\]
\end{proposition}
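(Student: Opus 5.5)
The argument is a pigeonhole bound followed by a passage from a real inequality to an integer one. Let $f:\mathcal{V}\to\{0,1\}^K$ be injective, so that $|f(\mathcal{V})| = V$. The image is a subset of $\{0,1\}^K$, and the hypercube has exactly $2^K$ elements, so
\[
V = |f(\mathcal{V})| \le 2^K .
\]
Equivalently, if $V > 2^K$, the pigeonhole principle forces two distinct tokens to share a code, which contradicts injectivity.

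Next we convert this to the stated form. Since $V \ge 1$ and $\log_2$ is strictly increasing on $(0,\infty)$, taking logarithms gives $\log_2 V \le K$. Because $K$ is an integer and $\lceil x\rceil$ is the least integer at least $x$, we conclude $K \ge \lceil \log_2 V\rceil$.

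The main point requiring care is this last rounding step. One must use that $K$ is an integer, since the real inequality $K \ge \log_2 V$ alone does not give the ceiling bound. The degenerate case $V=1$ gives $\lceil \log_2 1\rceil = 0$, so the bound is trivially true for every $K \ge 0$, with $\{0,1\}^0$ a singleton. We would also note that the bound is tight: the canonical code $c(t)=\mathrm{bin}_K(t)$ with $K=\lceil\log_2 V\rceil$ is injective, because $V \le 2^K$ and binary expansion is unique. This matches the minimal choice used in Section~\ref{sec:method}.
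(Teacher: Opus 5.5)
Your proof is correct and matches the paper's argument essentially step for step: pigeonhole gives $2^K \ge V$, taking base-2 logarithms gives $K \ge \log_2 V$, and integrality of $K$ upgrades this to $K \ge \lceil \log_2 V\rceil$. Your added remarks on the degenerate case $V=1$ and on tightness via $\mathrm{bin}_K$ are accurate but not needed for the statement.
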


\begin{proof}
The space $\{0,1\}^{K}$ contains exactly $2^{K}$ distinct binary vectors. An injective assignment of $V$ tokens to binary codes is possible only if $2^{K} \ge V$. Taking base-2 logarithms yields
\[
K \ge \log_2 V.
\]
Since $K$ must be an integer, the minimal admissible value is
\[
K \ge \left\lceil \log_2 V \right\rceil.
\]
\end{proof}

We therefore use the exact minimal width
\[
K = \left\lceil \log_2 V \right\rceil.
\]
For token IDs represented as integers $t\in\{0,\dots,V-1\}$, the canonical minimal code is the $K$-bit binary expansion of $t$:
\[
c(t) = \mathrm{bin}_{K}(t)\in\{0,1\}^{K},
\]
with components
\[
c(t)_j
=
\left\lfloor \frac{t}{2^j} \right\rfloor \bmod 2,
\qquad j=0,\dots,K-1.
\]
Here $j=0$ corresponds to the least significant bit. When $V=2^{K}$, this map is a bijection between token IDs and the full binary hypercube $\{0,1\}^{K}$. When $V<2^{K}$, the map remains injective and leaves $2^{K}-V$ codes unused.

Although the construction trivially extends to any code width $K' \ge \lceil \log_2 V\rceil$, we focus on the exact minimal case because it gives the strongest test of whether a trainable input embedding table is actually necessary.

\subsection{Affine recoding over \texorpdfstring{$\mathbb{F}_2$}{F2}}
\label{subsec:affine_recoding}

To test whether performance depends on the canonical ordering of token IDs or on a special geometry of the raw binary expansion, we also consider affine recodings of the codebook over the finite field $\mathbb{F}_2$. Let
\[
A \in GL(K,2)
\]
be an invertible $K\times K$ binary matrix and let
\[
b \in \{0,1\}^{K}
\]
be a binary shift vector. We define the affine-recoded token code as
\[
\tilde c(t) = A\,c(t) \oplus b,
\]
where $\oplus$ denotes addition modulo $2$, and all multiplications by $A$ are performed over $\mathbb{F}_2$.

This family is strictly richer than simple coordinate permutations: a dense invertible matrix $A$ can mix multiple input bits into each output bit while preserving invertibility. In the special case $A=I$ and $b=0$, we recover the canonical code.

After the affine recoding step, the binary vector $\tilde c(t)$ is cast to floating point and used as a standard real-valued input to the fixed lift described next.

\begin{proposition}[No collisions under affine recoding and full-rank lift]
\label{prop:no_collisions}
Assume that $c:\mathcal{V}\to\{0,1\}^{K}$ is injective, that $A\in GL(K,2)$ is invertible over $\mathbb{F}_2$, and that $R\in\mathbb{R}^{d\times K}$ has full column rank:
\[
\operatorname{rank}(R)=K.
\]
Then the full input map
\[
x(t)=R\,\tilde c(t)=R\,(A\,c(t)\oplus b)
\]
is injective as a map from $\mathcal{V}$ into $\mathbb{R}^{d}$. In particular, distinct tokens never collide at the model input.
\end{proposition}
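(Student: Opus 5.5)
The plan is to write the input map as a composition of three maps and show each is injective:
\[
t \;\longmapsto\; c(t)\;\longmapsto\; \tilde c(t)=A\,c(t)\oplus b \;\longmapsto\; \iota(\tilde c(t)) \;\longmapsto\; R\,\iota(\tilde c(t)).
\]
Here $\iota:\{0,1\}^K\to\mathbb{R}^K$ is the cast that sends each bit to the real number $0$ or $1$. A composition of injective maps is injective, so this suffices.

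\textbf{First map.} Injectivity of $c$ is a hypothesis. For the canonical code $c=\mathrm{bin}_K$ it also follows from uniqueness of binary expansions, as noted after Proposition~\ref{prop:minimal_code_length}.

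\textbf{Second map.} The affine map $\phi(u)=Au\oplus b$ on $\mathbb{F}_2^K$ is a bijection. Since $A\in GL(K,2)$, its inverse is
\[
\phi^{-1}(v)=A^{-1}(v\oplus b),
\]
using that $\oplus$ is its own inverse over $\mathbb{F}_2$. Alternatively, if $\phi(u)=\phi(u')$, then $A(u\oplus u')=0$, and invertibility of $A$ forces $u=u'$.

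\textbf{Third map.} The cast $\iota$ is injective, because distinct bit strings differ in some coordinate, where one has real value $0$ and the other $1$.

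\textbf{Fourth map.} It remains to show that $y\mapsto Ry$ is injective on $\mathbb{R}^K$. If $Ry=Ry'$, then $R(y-y')=0$. Full column rank means $\ker R=\{0\}$ by rank--nullity, so $y=y'$.

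The only step needing care is the change of arithmetic. The recoding happens in $\mathbb{F}_2$, while the lift acts on real vectors, and the cast $\iota$ is \emph{not} additive ($1\oplus1=0$ but $1+1\neq 0$ in $\mathbb{R}$). So the argument must treat $\iota$ purely as an injective set map and must not try to commute $R$ with $A$.

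I would close with a remark that the tiled identity $R$ used in the experiments has rank $K$, since its first block is $I_K$. The hypothesis therefore holds in the paper's setting, and distinct tokens never collide at the model input.
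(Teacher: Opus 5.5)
Your proof is correct and follows the same chain as the paper's: the trivial kernel of $R$ forces the recoded codes to agree, invertibility of $u\mapsto Au\oplus b$ over $\mathbb{F}_2$ then forces $c(t_1)=c(t_2)$, and injectivity of $c$ finishes. The paper runs this backwards from $x(t_1)=x(t_2)$ instead of as a composition of injective maps, and it handles the bit-to-real cast in one phrase where you make it an explicit step; the mathematical content is the same.
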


\begin{proof}
Suppose $x(t_1)=x(t_2)$ for two tokens $t_1,t_2\in\mathcal{V}$. Then
\[
R\,\tilde c(t_1)=R\,\tilde c(t_2),
\]
so
\[
R\big(\tilde c(t_1)-\tilde c(t_2)\big)=0.
\]
Because $\operatorname{rank}(R)=K$, the nullspace of $R$ is trivial, hence
\[
\tilde c(t_1)=\tilde c(t_2)
\]
as vectors in $\mathbb{R}^{K}$, and therefore also as binary vectors. Since
\[
\tilde c(t)=A\,c(t)\oplus b
\]
and $A$ is invertible over $\mathbb{F}_2$, equality of the affine-recoded codes implies
\[
c(t_1)=c(t_2).
\]
Finally, because $c$ is injective, this yields $t_1=t_2$. Therefore the input map $x(\cdot)$ is injective.
\end{proof}

The proposition is important operationally: in our setup, replacing the learned input table does \emph{not} collapse multiple tokens onto the same continuous input vector. Distinct token identities remain distinct all the way to the first trainable layer.

\begin{corollary}[Full hypercube coverage when \texorpdfstring{$V=2^K$}{V=2^K}]
\label{cor:full_hypercube}
If $V=2^{K}$ and $c(t)=\mathrm{bin}_{K}(t)$, then the set of affine-recoded codes
\[
\{\tilde c(t): t\in\mathcal{V}\}
\]
is exactly the full hypercube $\{0,1\}^{K}$.
\end{corollary}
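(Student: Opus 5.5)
The plan is to show that the affine-recoded codebook is the image of the full hypercube under a bijection of $\{0,1\}^K$, and then use a cardinality argument to conclude that this image is all of $\{0,1\}^K$.

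\emph{Step 1: the canonical code is onto.} When $V=2^K$, the canonical code $c(t)=\mathrm{bin}_K(t)$ is a bijection from $\mathcal{V}=\{0,\dots,2^K-1\}$ onto $\{0,1\}^K$. Injectivity holds by uniqueness of the base-2 expansion, and it is the canonical code that is meant here. Surjectivity holds because every bit vector $(\beta_0,\dots,\beta_{K-1})$ is the image of the integer $t=\sum_j \beta_j 2^j$, which lies in $\{0,\dots,2^K-1\}$. Hence
\[
\{c(t):t\in\mathcal{V}\}=\{0,1\}^K .
\]

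\emph{Step 2: the affine recoding is a bijection.} Define $\phi:\{0,1\}^K\to\{0,1\}^K$ by $\phi(u)=Au\oplus b$, with $\{0,1\}^K$ identified with $\mathbb{F}_2^K$. This map is injective by the argument already used in Proposition~\ref{prop:no_collisions}: if $Au_1\oplus b=Au_2\oplus b$, adding $b$ to both sides (since $b\oplus b=0$) gives $Au_1=Au_2$, and invertibility of $A$ then gives $u_1=u_2$. One can also write the inverse explicitly as $\phi^{-1}(v)=A^{-1}(v\oplus b)$. An injective self-map of a finite set is surjective, so $\phi$ is a bijection.

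\emph{Step 3: combine.} Since $\tilde c=\phi\circ c$,
\[
\{\tilde c(t):t\in\mathcal{V}\}=\phi\bigl(\{0,1\}^K\bigr)=\{0,1\}^K .
\]
As an alternative route, Proposition~\ref{prop:no_collisions} with $R=I_K$ shows that $t\mapsto\tilde c(t)$ is injective. Its image therefore has $V=2^K$ elements inside a set of size $2^K$, and so it is the whole hypercube.

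There is no real obstacle in this argument. The only point needing care is to keep the arithmetic in $\mathbb{F}_2$ when cancelling $b$ (using $b\oplus b=0$), rather than treating the codes as real vectors.
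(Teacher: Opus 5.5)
Your proof is correct and follows the paper's argument: when $V=2^K$ the canonical code is a bijection onto $\{0,1\}^K$, the affine map $z\mapsto Az\oplus b$ is a bijection of $\{0,1\}^K$, and so their composition has the whole hypercube as its image. Your extra details (explicit surjectivity of $\mathrm{bin}_K$ and the inverse $A^{-1}(v\oplus b)$) and your counting route via Proposition~\ref{prop:no_collisions} with $R=I_K$ are both sound; they just spell out more than the paper does.
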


\begin{proof}
When $V=2^K$, the canonical map $c(t)=\mathrm{bin}_{K}(t)$ is a bijection from $\mathcal{V}$ to $\{0,1\}^{K}$. Any affine map
\[
z \mapsto A z \oplus b
\]
with $A\in GL(K,2)$ is a bijection on $\{0,1\}^{K}$. Therefore the composition $t\mapsto \tilde c(t)$ is also a bijection onto the full hypercube.
\end{proof}

A useful consequence is that in the $V=2^K$ case, the codebook is perfectly balanced: each bit takes value $1$ for exactly $2^{K-1}$ tokens and value $0$ for exactly $2^{K-1}$ tokens. Likewise, for any distinct code coordinates $i\neq j$, each of the four patterns $(0,0)$, $(0,1)$, $(1,0)$, and $(1,1)$ occurs exactly $2^{K-2}$ times. Thus affine recoding changes the token-to-code assignment, but does not introduce trivial global frequency imbalances into the codebook.

\subsection{Zero-parameter lift to model width}
\label{subsec:lift}

The binary code $\tilde c(t)\in\{0,1\}^{K}$ must be mapped into the model width $d$. We do this with a fixed matrix
\[
R \in \mathbb{R}^{d\times K},
\]
yielding the continuous input vector
\[
x(t)=R\,\tilde c(t).
\]

Any fixed full-column-rank lift is admissible. In the main experiments, $d$ is chosen to be divisible by $K$, so we write
\[
d = sK
\]
for an integer $s$. We then use the tiled lift
\[
R=
\begin{bmatrix}
I_K\\
I_K\\
\vdots\\
I_K
\end{bmatrix}
\in \{0,1\}^{d\times K},
\]
where $I_K$ is the $K\times K$ identity matrix repeated $s$ times vertically. Equivalently,
\[
x(t)
=
\begin{bmatrix}
\tilde c(t)\\
\tilde c(t)\\
\vdots\\
\tilde c(t)
\end{bmatrix}
\in\mathbb{R}^{d},
\]
i.e., the $K$-dimensional code is tiled deterministically until the full model width is reached.

This lift has zero trainable parameters and full column rank. It also preserves the complete token code in every consecutive block of $K$ hidden dimensions. In our main setting, $V=65{,}536$, $K=16$, and $d=1024$, so the token identity presented to the model is an exact minimal 16-dimensional binary code tiled to width 1024.

\begin{proposition}[Effective input rank]
\label{prop:effective_rank}
Let
\[
\mathcal{X}=\{x(t): t\in\mathcal{V}\}\subseteq \mathbb{R}^{d}
\]
be the set of all possible input vectors. Then
\[
\dim\big(\operatorname{span}(\mathcal{X})\big)\le \operatorname{rank}(R)\le K.
\]
If $V=2^K$ and $\operatorname{rank}(R)=K$, then
\[
\dim\big(\operatorname{span}(\mathcal{X})\big)=K.
\]
\end{proposition}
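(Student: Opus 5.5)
The plan is to treat the two claims separately. The upper bound follows from column-space containment. The equality case follows by exhibiting $K$ linearly independent vectors inside $\mathcal{X}$.

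\emph{Upper bound.} Every input has the form $x(t)=R\,\tilde c(t)$ with $\tilde c(t)\in\{0,1\}^K\subseteq\mathbb{R}^K$. Hence each $x(t)$ lies in the column space $\operatorname{col}(R)=\{Rz: z\in\mathbb{R}^K\}$. Since $\operatorname{col}(R)$ is a linear subspace containing $\mathcal{X}$, it also contains $\operatorname{span}(\mathcal{X})$. Therefore $\dim\operatorname{span}(\mathcal{X})\le\dim\operatorname{col}(R)=\operatorname{rank}(R)$. Finally, $\operatorname{rank}(R)\le K$ because $R$ has only $K$ columns.

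\emph{Equality case.} Assume $V=2^K$. By Corollary~\ref{cor:full_hypercube}, the codes $\{\tilde c(t)\}$ form the full hypercube $\{0,1\}^K$. In particular, for each $j=1,\dots,K$ there is a token $t_j$ with $\tilde c(t_j)=e_j$, the $j$-th standard basis vector of $\mathbb{R}^K$. Note that the casting to floating point maps the bit $1\in\mathbb{F}_2$ to the real number $1$, so these really are the real basis vectors. Consequently $Re_j=x(t_j)\in\mathcal{X}$ for every $j$, and so $\operatorname{span}(\mathcal{X})\supseteq\operatorname{span}\{Re_1,\dots,Re_K\}=\operatorname{col}(R)$. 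Combined with the upper bound, this gives $\operatorname{span}(\mathcal{X})=\operatorname{col}(R)$, and hence $\dim\operatorname{span}(\mathcal{X})=\operatorname{rank}(R)=K$.

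\emph{Main subtlety.} Corollary~\ref{cor:full_hypercube} is stated for the canonical $c=\mathrm{bin}_K$. If one instead wants an arbitrary injective $c$ with $V=2^K$, the same argument still works: injectivity together with $|\mathcal{V}|=2^K$ makes $c$ a bijection onto the hypercube. Composing with the bijection $z\mapsto Az\oplus b$ then keeps $\tilde c$ surjective onto $\{0,1\}^K$.

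The only other point needing care is the passage between $\mathbb{F}_2$ and $\mathbb{R}$. The span is taken over $\mathbb{R}$ after casting, so I will use only the fact that the standard basis vectors occur as codes. I will not use any linearity of the affine map, since that map is linear only over $\mathbb{F}_2$.

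For the tiled lift, I can also check full column rank directly: $R$ contains $I_K$ as a $K\times K$ block, so the hypothesis $\operatorname{rank}(R)=K$ holds in the experiments.
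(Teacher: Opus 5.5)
Your proof is correct and follows essentially the same route as the paper: column-space containment gives the upper bound, and full hypercube coverage (Corollary~\ref{cor:full_hypercube}) gives $\operatorname{span}(\mathcal{X})=\operatorname{col}(R)$ in the equality case. Your explicit use of the basis vectors $e_j\in\{0,1\}^K$ just spells out why the hypercube spans $\mathbb{R}^K$, which the paper asserts directly, and your extension to an arbitrary injective $c$ is a harmless generalization.
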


\begin{proof}
By construction, every input vector has the form
\[
x(t)=R z_t
\]
for some binary vector $z_t=\tilde c(t)\in\{0,1\}^{K}$. Therefore
\[
\operatorname{span}(\mathcal{X}) \subseteq \operatorname{col}(R),
\]
which implies
\[
\dim\big(\operatorname{span}(\mathcal{X})\big)\le \operatorname{rank}(R)\le K.
\]
Now assume $V=2^K$ and $\operatorname{rank}(R)=K$. By \Cref{cor:full_hypercube}, the set $\{z_t : t\in\mathcal{V}\}$ equals the full hypercube $\{0,1\}^{K}$. The span of the full hypercube in $\mathbb{R}^{K}$ is all of $\mathbb{R}^{K}$, so
\[
\operatorname{span}(\{z_t\})=\mathbb{R}^{K}.
\]
Hence
\[
\operatorname{span}(\mathcal{X}) = R\,\mathbb{R}^{K} = \operatorname{col}(R),
\]
whose dimension is $\operatorname{rank}(R)=K$.
\end{proof}

This proposition highlights how severe the intervention is: before the first trainable layer, all token inputs lie in a subspace of dimension at most $K$, even though the model width is $d\gg K$. In our main experiments, this means that a 1024-dimensional Transformer receives inputs whose effective linear rank is only $16$.

\subsection{Frozen-lookup and table-free implementations}
\label{subsec:implementations}

The same deterministic input map can be realized in two equivalent ways.

\paragraph{Frozen lookup.}
One may precompute
\[
E_{\mathrm{fix}}[t] = x(t)\in\mathbb{R}^{d}
\]
for all $t\in\mathcal{V}$ and store the resulting matrix
\[
E_{\mathrm{fix}}\in\mathbb{R}^{V\times d}
\]
as a non-trainable lookup table. This is convenient in ablations because it can be implemented using the same software path as a standard embedding layer, except that the weights are fixed.

\paragraph{Table-free implementation.}
Alternatively, one may compute the code directly from the integer token ID at runtime:
\[
c(t)_j
=
\left\lfloor \frac{t}{2^j} \right\rfloor \bmod 2,
\qquad j=0,\dots,K-1,
\]
apply the affine recoding
\[
\tilde c(t)=A\,c(t)\oplus b,
\]
and then form
\[
x(t)=R\,\tilde c(t).
\]
This version has no input embedding table at all.

For a fixed choice of $(c,A,b,R)$, the frozen-lookup and table-free implementations are functionally identical: they produce exactly the same input vector $x(t)$ for every token $t$. The difference is purely computational, not statistical.

\subsection{Parameterization and memory implications}
\label{subsec:params}

A standard learned input embedding table contains
\[
Vd
\]
trainable parameters. By contrast, our proposed input interface contains \emph{zero} trainable input parameters. In the table-free affine variant, the fixed specification consists only of:
(i) the minimal code width $K$,
(ii) an invertible binary matrix $A\in GL(K,2)$,
(iii) a binary shift vector $b\in\{0,1\}^{K}$, and
(iv) the fixed lift $R$.

Thus the trainable input parameter count drops from $Vd$ to $0$, while the fixed metadata scales as $O(K^2)$ rather than $O(Vd)$. In our main setting with $V=65{,}536$ and $d=1024$, a conventional learned input embedding table contains
\[
65{,}536 \times 1024 = 67{,}108{,}864
\]
trainable parameters, whereas the table-free minimal-code interface uses no trainable input parameters at all.

\paragraph{Summary.}
Our method replaces the standard learned input lookup
\[
t \mapsto E_{\mathrm{in}}[t]
\]
with the deterministic map
\[
t \mapsto x(t)=R\,(A\,\mathrm{bin}_{K}(t)\oplus b),
\qquad
K=\lceil\log_2 V\rceil.
\]
This preserves exact token identity, guarantees no collisions when $R$ has full column rank, uses the minimal possible binary code width, and removes the trainable input embedding table entirely in the table-free implementation.


\newpage

\end{document}